\theoremstyle{plain}
\newtheorem{theorem}{Theorem}[section]
\newtheorem{lemma}[theorem]{Lemma}
\theoremstyle{definition}
\newtheorem{definition}[theorem]{Definition}
\theoremstyle{remark}
\newcommand{\XQ}{X/{\sim}}
\newcommand*{\vv}[1]{\overrightarrow{#1}}
\begin{document}


\begin{center}
\Large 
Dimension Reduction Using Active Manifolds\\
\normalsize
Team Bridges, Summer 2016\\
Robert A. Bridges, Ph.D., Christopher R. Felder, Chelsey R. Hoff\\
\end{center}


\section{Introduction}

Scientists and engineers rely on accurate mathematical models to quantify the objects of their studies, which are often high-dimensional. Unfortunately, high-dimensional models are inherently difficult, i.e. when observations are sparse or expensive to determine. One way to address this problem is to approximate the original model with fewer input dimensions. Our project goal was to recover a function $f$ that takes $n$ inputs and returns one output, where $n$ is potentially large. For any given $n$-tuple, we assume that we can observe a sample of the gradient and output of the function but it is computationally expensive to do so. This project was inspired by an approach known as Active Subspaces, which works by linearly projecting to a linear subspace where the function changes most on average. Our research gives mathematical developments informing a novel algorithm for this problem. Our approach, Active Manifolds, increases accuracy by seeking nonlinear analogues that approximate the function. The benefits of our approach are eliminated unprincipled parameter, choices, guaranteed accessible visualization, and improved estimation accuracy.


\section{Related Work}
Dimension reduction, broadly defined, is the mapping of potentially high dimensional data to a lower dimensional space. Dimension reduction techniques can be categorized into two main categories, projective methods and manifold modeling  \cite{burges2010dimension}. Dimension reduction techniques are widely used across many domains to analyze high-dimensional models or high-dimensional data sets because they allow important  low-dimensional features to be extracted and allow for data visualization. The most commonly known and used projective method is Principal Component Analysis (see \cite{PCA}). The method that inspired our work, Active Subspaces, can also be considered a projective method. The Nystr\"{o}m method (see \cite{nystrommethod}) and related variations rely on eigenvalue problems and compromise the bulk of manifold modeling techniques. Our method, Active Manifolds, departs from the use of projective and spectral methods but is a manifold modeling method. 


\subsection{Active Subspaces}
We chose to study Active Subspaces because it is a dimension reduction technique that reduces the dimension of the input space while respecting the output, its applicability to a wide range of functions ($C^1(\mathbb{R}^n, \mathbb{R})$), and because of its accessibility to scientists and engineers with a limited mathematical background. The Active Subspaces method finds lower-dimensional subspaces of the domain by finding the directions in which the function changes the most on average. The Active Subspaces method has two main limitations. First, many functions do not admit a linear active subspace, e.g. $f(x,y) = x^2 + y^2$. Second, the linearity of active subspaces and projections is restrictive and can increase estimate error.

\noindent Below is a brief description of the Active Subspaces algorithm.

\begin{enumerate}
\item Sample $\nabla f$ at $N$ random points $x \in U$
\item Find the directions in which $f$ changes the most on average, \emph{\textbf{the active subspace}}. This is done by computing the eigenvalue decomposition of the matrix
$$\mathbf{C} = \frac{1}{N} \sum_{i=1}^{N} \nabla_x f_i \nabla_x f_{i}^{T} = \mathbf{W}\mathbf{\Lambda}\mathbf{W}^{T}     $$
\item Perform regression to estimate $f$ along the active subspace to obtain $f \approx \hat{f}$ (this requires sampling $f$ at random points $x \in U$).
\item Given a new point $p \in U$, project $p$ to the active subspace and use $\hat{f}$ to obtain the value $f(p) \approx \hat{f}(p)$.
\end{enumerate}


\section{Results}

\subsection{Theory}

Recall that arc length of a  $C^1$ curve $x(t) : [0, 1] \rightarrow \mathbb{R}^n$ is given by
$$ \int_{0}^{1} |x'(t)| \ dt \ .$$

\noindent Let $\mathcal{U} \subseteq \mathbb{R}^n$, where $n < \infty$. Assume $f: \mathbb{R}^n \rightarrow \mathbb{R}$ is $C^1$. We seek
$$\arg \max \int_{0}^{1}  \langle \nabla f(x(t)), x'(t) \rangle \ dt.$$ \\
over all $C^1$ functions $x(t) : [0, 1] \rightarrow \mathbb{R}^n$, and $\|x'\| = 1$ (constant speed).

\noindent Notice the integrand can be expressed as
$$ \langle \nabla f(x(t)), x'(t) \rangle  = \|\nabla f(x(t)) \| \ \| x'(t) \| \cos{\theta} $$
where $\theta$ is the angle between $\nabla f(x(t))$ and $x'(t)$. Trivially, this quantity is maximal when $\theta = 0$, indicating that $\nabla f(x(t))$ and $x'(t)$ are collinear and point in the same direction. Thus,
\begin{equation}
\label{am-condition}
x'(t)= \frac{\nabla f(x(t))}{\| \nabla f(x(t)) \|} \ .
\end{equation}

\begin{definition}
\label{am-def}
Let $U \subseteq \mathbb{R}^n$ and $f : U \xrightarrow[]{C^1} \mathbb{R}$ and $\mathcal{M} \subseteq U$. We say that $\mathcal{M}$ is an $\textbf{active manifold}$ of $f$ if and only if, for all charts $(\mathcal{M}, \Phi)$ on $U$, condition (\ref{am-condition}) is satisfied when $\Phi = x^{-1}(t)$.
\end{definition}

\begin{lemma}
\label{am-prop-1}
Given $f : U \xrightarrow[]{C^1} \mathbb{R}$ and an initial value $x_{0} \in U$, there exists a unique solution $x(t)$ to the system of first-order differential equations described in (\ref{am-condition}).
\end{lemma}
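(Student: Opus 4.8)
The plan is to recognize (\ref{am-condition}) as an autonomous system of first-order ordinary differential equations $x'(t) = F(x(t))$, where the right-hand side is the normalized gradient vector field
$$ F(x) = \frac{\nabla f(x)}{\| \nabla f(x) \|}, $$
and then to invoke the Picard--Lindel\"of (Cauchy--Lipschitz) existence-and-uniqueness theorem with initial condition $x(0) = x_0$. In this framing the entire argument reduces to verifying the hypotheses of that theorem for $F$ in a neighborhood of $x_0$: that $F$ is well defined, continuous, and locally Lipschitz there.

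First I would address where $F$ is even defined. The quotient $\nabla f(x) / \| \nabla f(x) \|$ makes sense only at points that are not critical points of $f$, i.e. where $\nabla f(x) \neq 0$. Assuming $x_0$ is such a point, continuity of $\nabla f$ (which we have since $f \in C^1$) guarantees an open ball $B$ about $x_0$ on which $\nabla f$ never vanishes; on $B$ both the numerator and the scalar $\| \nabla f(x) \|$ are continuous and the latter is bounded away from $0$, so $F$ is continuous on $B$. Continuity alone already yields local existence of a solution by Peano's theorem, so the existence half of the claim is the easier part.

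Next I would establish uniqueness, which requires the local estimate $\| F(x) - F(y) \| \le L\,\| x - y \|$ on a (possibly smaller) ball. The clean way is to write $F = g \circ \nabla f$, where $g(v) = v/\|v\|$ is smooth on $\mathbb{R}^n \setminus \{0\}$ and hence Lipschitz on any compact set avoiding the origin; composing with $\nabla f$ and using that $\nabla f$ maps $B$ into such a compact set reduces everything to a Lipschitz bound on $\nabla f$ itself. This is exactly the step I expect to be the main obstacle: $f \in C^1$ gives only continuity of $\nabla f$, not a Lipschitz bound, so Picard--Lindel\"of does not apply verbatim. To close this gap I would either strengthen the standing hypothesis to $f \in C^{1,1}$ (in particular $C^2$), under which $\nabla f$ is locally Lipschitz and the composition argument goes through cleanly, or restrict the uniqueness assertion to the regime where $\nabla f$ is locally Lipschitz.

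Finally, with $F$ continuous and locally Lipschitz on $B$, Picard--Lindel\"of furnishes a unique $C^1$ solution $x(t)$ on a maximal interval containing $0$ with $x(0) = x_0$; since $\| F \| \equiv 1$, the solution automatically has unit speed, consistent with the constant-speed normalization imposed earlier. The one genuine subtlety to flag in the write-up is the behavior at critical points of $f$: there $F$ is undefined and the integral curve may terminate, so the ``unique solution'' should be understood as the maximal integral curve within the open set $\{ \nabla f \neq 0 \}$.
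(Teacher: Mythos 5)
Your approach is the same as the paper's: reduce the claim to the standard existence--uniqueness theorem for first-order ODEs by verifying a Lipschitz condition on the right-hand side. The difference is that you carry out the verification honestly, and in doing so you expose two real defects in the paper's own argument. First, the paper asserts that $f \in C^1$ makes $\nabla f$ Lipschitz on a compact convex $U$; this is false ($C^1$ gives only uniform continuity of $\nabla f$), and your proposed fix --- strengthening the hypothesis to $C^{1,1}$ or $C^2$, or explicitly assuming $\nabla f$ locally Lipschitz --- is exactly what is needed to make Picard--Lindel\"of apply. Second, the paper checks the Lipschitz condition for $\nabla f$ rather than for the actual right-hand side $\nabla f/\|\nabla f\|$ of (\ref{am-condition}), silently ignoring the critical points where that field is undefined; your observation that the statement should be read as asserting a unique maximal integral curve in the open set $\{\nabla f \neq 0\}$, with $x_0$ a regular point, is the correct repair, and it matters for this paper since the subsequent construction deliberately runs the curve from a local minimum toward a local maximum, i.e.\ between critical points where the field degenerates. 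In short: same route, but your version is the one that actually closes.
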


\begin{proof}
Assume the region $U$ is compact and convex. Since $f$ is $C^1$, $\nabla f(x(t))$ satisfies the Lipschitz condition
$$| \nabla f(x(t)) - \nabla f(\hat{x}(t)) | \leq L |x(t) -  \hat{x}(t)|$$ for $x(t), \hat{x}(t) \in U$ and some Lipschitz constant $L$. These conditions are sufficient for the existence and uniqueness of a solution $x(t)$ to (\ref{am-condition}) for a given initial value  $x_{0} \in U$ (see Chapter 6, Theorem 1 from \cite{birkhoff1969ODE}).
\end{proof}

%
%
%
%
%

\begin{multicols}{2}
\noindent For the following theorem, let
\begin{itemize}
\item $M$ = range$(x)$
\item $c$ be a fixed critical point of $f$
\item $X$ be the deleted attracting basin of $c$
\item $x \sim y \iff f(x) = f(y)$
\item $[x] = \{ y \in X : f(x) = f(y) \}$
\item $\pi : X \rightarrow \XQ$\\
\end{itemize}
\columnbreak
\begin{tikzcd}[row sep=huge, column sep = huge]
 	& X \arrow[rd, "f"] \arrow[d, "\pi"] & \\[1.9cm]
	\mathbb{R} \arrow[ur, "x(t)"] \arrow[r, "\pi \circ x"] & \XQ \arrow[r, "\tilde{f}"] & \mathbb{R}
\end{tikzcd}
\centering Commutative diagram for Theorem \ref{FToAM}
\end{multicols}

\begin{theorem}
\label{FToAM}
\hspace*{1 em}
\begin{enumerate}[label=(\roman*)]
\item If $x(t)$ is a solution to (\ref{am-condition}), then $M$ is a 1-dim. submanifold of $\mathbb{R}^n$.
\item $\XQ$ is a manifold.
\item If $x_0 \in X$ and $x(t)$ is a solution to (\ref{am-condition}) then $M$ imbeds into the manifold $\XQ$.
\end{enumerate}
\end{theorem}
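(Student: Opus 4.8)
The plan is to handle the three claims in sequence, leaning throughout on one computation: differentiating $f$ along a solution curve and applying (\ref{am-condition}) gives
$$\frac{d}{dt} f(x(t)) = \langle \nabla f(x(t)), x'(t)\rangle = \|\nabla f(x(t))\|,$$
which is \emph{strictly positive} on $X$, since $c$ is the only critical point in the basin and the deleted basin $X$ removes it. For (i), I would first note that $\|x'(t)\|=1$ forces $x'(t)\neq 0$, so $x$ is a $C^1$ immersion of an interval; the real content is that $x$ is a homeomorphism onto $M$. Injectivity follows from the identity above: if $x(t_1)=x(t_2)$ then $f(x(t_1))=f(x(t_2))$, but $t\mapsto f(x(t))$ is strictly increasing, so $t_1=t_2$. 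Since $f\circ x$ is continuous and strictly increasing it is a homeomorphism onto its image, and factoring convergence through $f\circ x$ shows $x^{-1}$ is continuous on $M$. Hence $M$ is a $1$-dimensional embedded submanifold of $\mathbb{R}^n$.

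For (ii), the idea is to identify $\XQ$ with the interval $f(X)$. By construction the equivalence classes are exactly the level sets of $f$ in $X$, so the induced map $\tilde f:\XQ\to\mathbb{R}$ is a well-defined injection with image $f(X)$, and it is continuous by the universal property of the quotient because $f=\tilde f\circ\pi$. The crucial step is upgrading the continuous bijection $\tilde f:\XQ\to f(X)$ to a homeomorphism. Since $\nabla f$ never vanishes on $X$, the map $f$ is a submersion and therefore open; a continuous, open, surjective map is a quotient map, which forces $\tilde f$ to be a homeomorphism onto $f(X)$ with its subspace topology. As the basin $X$ is connected, $f(X)$ is an interval, hence a $1$-dimensional manifold, and transporting this chart structure across $\tilde f$ makes $\XQ$ a manifold.

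For (iii), I would study $\pi|_M:M\to\XQ$ and show it is an embedding. It is injective because distinct points of $M$ lie in distinct level sets, again by strict monotonicity of $f$ along $x$. The clean finish is to observe that $\tilde f\circ\pi|_M = f|_M$ is itself an embedding, being the $C^1$ strictly increasing map $t\mapsto f(x(t))$ read on $M$, hence a homeomorphism onto $f(M)$. Since $\tilde f$ is a homeomorphism by (ii), we can write $\pi|_M = \tilde f^{-1}\circ f|_M$ as a composition of embeddings, so $M$ imbeds into $\XQ$.

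The main obstacle is part (ii): a priori the quotient topology need not be Hausdorff or locally Euclidean, so asserting that $\XQ$ is a manifold is where the work lies. The resolution reduces entirely to showing $f$ is an open map, equivalently a quotient map onto $f(X)$, and this is precisely where the nonvanishing of $\nabla f$ on the deleted basin is essential. I would also want to confirm that $X$ is connected, so that $f(X)$ is a single interval rather than a disjoint union of intervals.
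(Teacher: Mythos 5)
Your proposal takes essentially the same route as the paper: both use $f$ to induce a single global chart on $M$ (via $f|_M$) and on $X/{\sim}$ (via $\tilde f$), and then realize $\pi|_M$ as $\tilde f^{-1}\circ f|_M$, a composition of these chart maps. The paper's proof is only a three-line sketch, so your added justifications --- strict positivity of $\frac{d}{dt}f(x(t)) = \|\nabla f(x(t))\|$ on the deleted basin giving injectivity, and openness of $f$ upgrading the continuous bijection $\tilde f$ to a homeomorphism --- are precisely the details the paper leaves implicit, and they are correct.
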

\begin{proof}
\hspace*{1 em}
\begin{enumerate}[label=(\roman*)]
\item Realize $(f \rvert_{M}, M)$ as the single chart for $M$ induced by $f$, thus $M$ is a 1-dimensional submanifold of $\mathbb{R}^n$.
\item Realize $\tilde{f}: \XQ \rightarrow \mathbb{R}$  given by $\tilde{f}([x]) = f(x)$ is continuous so $\XQ$ is a manifold with a single chart $(\tilde{f}, \XQ)$.
\item Realize $\pi\rvert_M : M \rightarrow \pi(M)$ is a bijection. It follows that $\pi\rvert_{M}$ is a diffeomorphism from $M$ to $\pi(M)$ since charts on $M$ and $\XQ$ are induced by $f$.
\end{enumerate}
\end{proof}

Further, the Implicit Function Theorem implies that $\{f = const.\}$ is a $(n-1)$-dimensional manifold and orthogonally intersects $\{x(t)\}$. We refer to such a manifold as an $\textit{\textbf{Active Manifold}}$ (denoted $AM$). Thus, we can realize an $AM$ by a numerical solution to (\ref{am-condition}).

\subsection{Active Manifold Algorithm Description}

The Active Manifolds algorithm has three main procedures:
\begin{enumerate}
\item Building the Active Manifold
\item Approximating the function of interest, $f \approx \hat{f}$
\item Projecting a point of interest to the Active Manifold
\end{enumerate}

\subsubsection{Building the Active Manifold}
For a given function, $f : \mathcal{U} \xrightarrow[]{C^1} \mathbb{R}$ where $\mathcal{U} \subseteq \mathbb{R}^n$, we describe below a process to build a corresponding active manifold. The active manifold will be a one-dimensional curve in the hypercube $[-1,1]^{n}$ that moves from a local minimum to a local maximum.  We define a grid with spacing size $\epsilon$ then compute $\nabla f$ at each grid point. To build the active manifold, we use a modified gradient ascent/descent scheme with a nearest neighbor search.
\begin{enumerate}

\item Construct an $n$-dimensional grid with spacing size $\epsilon$.
\begin{enumerate}[label*=\arabic*.]
\item Compute $\nabla f$ at each grid point.
\item Normalize $\nabla f$ samples.
\end{enumerate}
\item  Given an initial starting point $x_0 \in U$, use a gradient ascent/descent scheme with a nearest neighbor search to find a numerical solution to $$x'(t) = \frac{\nabla f(x(t))}{||\nabla f(x(t))||}$$ with the samples from step 1. The set $\{x(t) \}_t$ is an active manifold on $[-1,1]^n$.
\begin{enumerate}[label*=\arabic*.]
\item While the active manifold builds, save the number of steps and functional values corresponding to the closest grid point at each step as ordered lists $\mathbf{S}:= [ 0,\ldots,step_k,\ldots, step_n ]$ and $\mathbf{Z}:=[ z_0,\ldots,z_k,\ldots, z_n ]$.
\item Scale $\mathbf{S}$ by $n$ so that $\mathbf{S}:= [ 0,\ldots,\frac{step_k}{n},\ldots, 1]$
\end{enumerate}
\end{enumerate}

\subsubsection{One-Dimensional Function Approximation}
To obtain a one-dimensional approximation $f \approx \hat{f}$, perform regression on the points of $M$. A major benefit of our method is that the set $\mathbf{M}: = \{(s,z): s_i \in \mathbf{S}, z_j \in \mathbf{Z} \text{ for } i = j \}$ can be easily plotted and serves as a visual aid to help choose a best fit model.

\subsubsection{Traversing the Level Set}
Given a point $p$, we would like to find $\hat{f}(q)$, where $q \in [p]$ and $q \in \textbf{M}$. This requires an iterative process that uses the orthogonal directions of $\nabla f(p)$ to travel along the level set corresponding to $f(p)$, until we intersect the active manifold. For the following algorithm, we assume that $\nabla f$ has been normalized and tolerance $\epsilon$ and step size $\delta$ have been selected.
\begin{enumerate}
\item Given a point $p$, find $\arg \min \| p - m \|$ for $m \in \mathcal{M}$ (closest point on the manifold to $p$)$^1$
\item Construct a vector from $p$ to $m$, $\vv{u}$.
\item Find $\vv{v} = Proj_{(\nabla f(p))^{\perp}}(\vv{u})$.$^2$
\item While  $\|p - m \| < \epsilon$, let  $p = p + \epsilon p$.
\item Parameterize the line segment, $M(s)$, between $m$ and $m_+$, where $m_+$ is the next closest point on the manifold to $p$.$^3$
\item Determine $s$ such that $M(s)  - \vv{u}$ is orthogonal to $\nabla f(p)$.$^4$
\item Determine $t$ such that $M(s) = x(t)$.
\item Evaluate $\hat{f}(t) \approx f(p)$.
\end{enumerate}

\begin{figure}[h!]
\centering
\includegraphics[width = .5\linewidth]{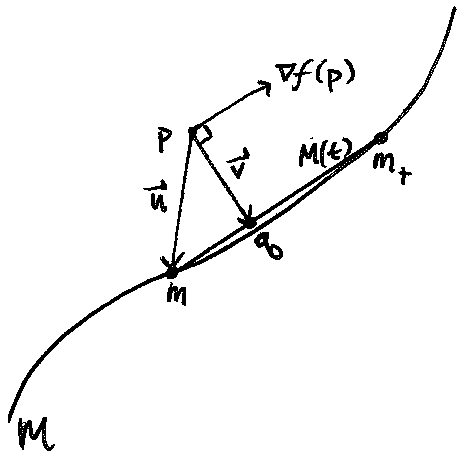}
\caption{Schematic for Level Set Algorithm in $\mathbb{R}^2$ when starting point $p$ is one step from manifold. Notation in schematic matches pseudo-algorithm above. }
\end{figure}

\subsubsection{Algorithm Notes}
\begin{enumerate}
\item  One may be enticed to minimize a distance function $D$, for example
$$ D(t) = \| x(t) - p \|^2$$
by letting
\begin{align*}
0       &= \frac{\partial D(t)}{\partial t}\\
	&= 2 \ \langle x(t) - p,\  \frac{d}{dt}(x(t) - p) \rangle \\
\end{align*}
but this is computationally inefficient because now we must compute $\frac{d}{dt}x(t)$ along with $x(t) - p$. Instead, we recommend computing $\| x(t) - p \|^2$ and searching for the minimum or using some other nearest neighbor search.
\item If $\vv{u} = [u_0, \ldots , u_n]^T$, it is convenient to express $\vv{v} = \vv{u} - \langle u, \nabla f(p) \rangle \nabla f(p).
$$\hat{e_0} = \nabla f(p) / \| \nabla f(p) \|$, and $\{\hat{e_0}, \ldots, \hat{e_n}\}$ is an orthonormal basis for $\mathbb{R}^n$, it is helpful to express

\item It is convenient to let
$$M(t) = (m_{i+1} - m)t + m, \ t \in [0, 1]$$
\item The point on $M(t)$ for which $M(t) - \vv{u}$ is orthogonal to $\nabla f (p)$ can be determined by solving for t in
$$\langle (m_{i+1} - m)t + m - \vv{u}, \hat{e_0} \rangle$$.
Solving for $t$ gives
$$t = \frac{\sum_{k = 1}^n(p_k - m_{i,k})\hat{e_{0,k}}}{\sum_{l = 1}^n(m_{i+1,l} - m_{i,l})\hat{e_{0,l}}}$$
\end{enumerate}

\subsection{Empirical Results}
For proof of concept and comparison to methods in $\cite{constantine2015active}$, we proceed with data synthesized from two functions,
$$f_1(x, y) = \exp{y - x^2}$$
$$f_2(x, y) = x^3 + y^3 + 0.2x + 0.6y.$$
For each example, we are interested in how well two functions, one fit to the the Active Subspace (AS) and one fit to the Active Manifold (AM), recover the values of the function for points outside of the AS and AM. We build the AS and AM and calculate the average $L^1$ error for a set of random test points.
The following experimental set up was observed for each example.
\begin{enumerate}
\item Define a uniform grid on $[-1, 1]^2$ with 0.05 point spacing.
\item Evaluate the gradient of each function, computed analytically, at each grid point.
\item Build the AS and AM using the gradient.
\item Fit the AS and AM with a polynomial ($\hat{f}_1$ quartic, $\hat{f}_2$ quintic).
\item Draw 100 random samples $p \in [-1, 1]^2$  and map them to the AS and AM.
\item Evaluate $f$ at each sample point and $\hat{f}$ at the corresponding projection point.
\end{enumerate}
Upon completing the experiments, average absolute errors between $f$ and $\hat{f}$ were computed.
\begin{center}
\begin{tabular}{ | c | c | c | }
\hline
& $f_1$ & $f_2$ \\
\hline
Manifold $L^1$ Error &  $2.601 \times 10^{-2}$  \ & $7.428 \times 10^{-2} $ \\
\hline
Subspace $L^1$ Error & $1.486 \times 10^{-1}$ \ & $2.051 \times 10^{-1}$ \\
\hline
\end{tabular}
\end{center}
Notice that the AM reduces average absolute error, by an order of magnitude, in both examples.

In \cite{constantine2015discovering} the authors investigate an active subspace in a 5-dimensional single-diode solar cell model. We have followed by reproducing their results with their data, while also implementing the Active Manifold algorithm. Again, we are interested in comparing estimation errors. The experiment included 10,000 randomly sampled points from $[-1, 1]^5$. This sample was partitioned into two randomly ordered sets containing 8,000 and 2,000 points, used for training and testing, respectively.
After an eight fold Monte Carlo simulation, the mean of the average absolute error was computed.
\begin{center}
\begin{tabular}{ | c | c | }
\hline
Manifold $L^1$ Error &  $4.150 \times 10^{-2}$\\
\hline
Subspace $L^1$ Error &  $2.831 \times 10^{-1}$\\
\hline
\end{tabular}
\end{center}
Again, the AM reduces average absolute error, by an order of magnitude


\section{Conclusions and Future Work}
The primary issue to be addressed in future work is determining the most appropriate way to choose an active manifold. What happens when a level set never intersects the active manifold? Should a new manifold be chosen? For future work, we propose a parallel program implementation that may run the algorithm for two or more manifolds. Future work will also seek error bounds and computational complexity estimates for the algorithm.


\section*{Acknowledgements}

\begin{itemize}
\item This work was supported in part by the U.S. Department of Energy, Office of Science, Office of Workforce Development for Teachers and Scientists (WDTS) under the program SULI.
\item ORNL's Cyber and Information Security Research Group (CISR)

\end{itemize}

\bibliographystyle{unsrt}
\bibliography{am}

\end{document}